\let\IEEEmaketitle=\maketitle
\renewcommand{\maketitle}{\begingroup\let\footnote=\thanks \IEEEmaketitle\endgroup}
\newcolumntype{L}{>{\centering\arraybackslash}X}
\newtheorem{lem}{Lemma}
\DeclareMathOperator*{\argmax}{argmax} 
\algnewcommand\algorithmicforeach{\textbf{for each}}
\algnewcommand{\LineComment}[1]{\Statex \(\///\) #1}
\begin{document}
\bstctlcite{IEEEexample:BSTcontrol}

%\title{\LARGE{Efficient Neural Architecture Search Through Hessian-Based Search~Space~Pruning and Tree-Structured Dual-Threshold Parzen Estimation}}

\title{\Large{Sensitivity-Aware Mixed-Precision Quantization and Width Optimization of\\Deep Neural Networks Through Cluster-Based Tree-Structured Parzen Estimation}}

\author{\IEEEauthorblockN{Seyedarmin Azizi*, Mahdi Nazemi*\thanks{*Seyedarmin Azizi and Mahdi Nazemi contributed equally to this work.}, Arash Fayyazi, and Massoud Pedram}
\IEEEauthorblockA{Department of Electrical \& Computer Engineering, University of Southern California, Los Angeles, CA, USA\\
    \url{{seyedarm,mnazemi,fayyazi,pedram}@usc.edu}}
}
		
\maketitle
\begin{abstract}

As the complexity and computational demands of deep learning models rise, the need for effective optimization methods for neural network designs becomes paramount. This work introduces an innovative search mechanism for automatically selecting the best bit-width and layer-width for individual neural network layers. This leads to a marked enhancement in deep neural network efficiency. The search domain is strategically reduced by leveraging Hessian-based pruning, ensuring the removal of non-crucial parameters. Subsequently, we detail the development of surrogate models for favorable and unfavorable outcomes by employing a cluster-based tree-structured Parzen estimator. This strategy allows for a streamlined exploration of architectural possibilities and swift pinpointing of top-performing designs. Through rigorous testing on well-known datasets, our method proves its distinct advantage over existing methods. Compared to leading compression strategies, our approach records an impressive 20\% decrease in model size without compromising accuracy. Additionally, our method boasts a 12\(\times\) reduction in search time relative to the best search-focused strategies currently available. As a result, our proposed method represents a leap forward in neural network design optimization, paving the way for quick model design and implementation in settings with limited resources, thereby propelling the potential of scalable deep learning solutions.
\end{abstract}

\section{Introduction}
\label{sec:intro}
\noindent
Deep neural networks (DNNs) have emerged as highly powerful and versatile tools for tackling real-world problems across various domains, including computer vision \cite{DBLP:conf/cvpr/HeZRS16,DBLP:conf/bmvc/ZagoruykoK16,DBLP:conf/iclr/DosovitskiyB0WZ21,DBLP:conf/nips/TolstikhinHKBZU21}, natural language processing \cite{DBLP:conf/nips/VaswaniSPUJGKP17,DBLP:conf/naacl/DevlinCLT19,DBLP:journals/corr/abs-2302-13971}, signal processing \cite{yildirim2018arrhythmia, baraeinejad2022design}, and diagnostics \cite{wang2021deep,razmara2024fever}.
The remarkable performance of DNNs can be attributed to their capacity to learn intricate patterns and representations from large-scale data, involving millions or even billions of arithmetic operations and parameters.
However, this success comes at the expense of high compute cycles, memory footprint, I/O bandwidth, and energy consumption, resulting in an increased carbon footprint and limitations in deploying them on resource-constrained platforms.

To address these challenges, quantization and structured pruning techniques have emerged as promising strategies, offering the potential to mitigate the computational and memory burden of DNNs while maintaining satisfactory accuracy \cite{DBLP:journals/corr/abs-1805-06085,DBLP:journals/corr/ZhouNZWWZ16,DBLP:conf/iclr/0022KDSG17,DBLP:conf/iccv/LuoWL17}.
These techniques have proven effective in making DNNs more efficient, enabling their deployment on edge devices, and accelerating inference in cloud environments.
However, the successful application of quantization and structured pruning heavily relies on finding the optimal bit-width and layer-width for each layer of the DNN, which presents a formidable challenge due to the exponential growth of the search space as the number of layers increases.

The challenges in optimizing the bit-width for each layer of a DNN are further compounded by the inherent diversity in weight distributions across different layers (see Fig.~\ref{fig:weight-distribution} in addition to the sensitivity of a DNN's predictions to each layer's weights.
\begin{figure}[tb]
    \centering
    \includegraphics[width=0.9\columnwidth]{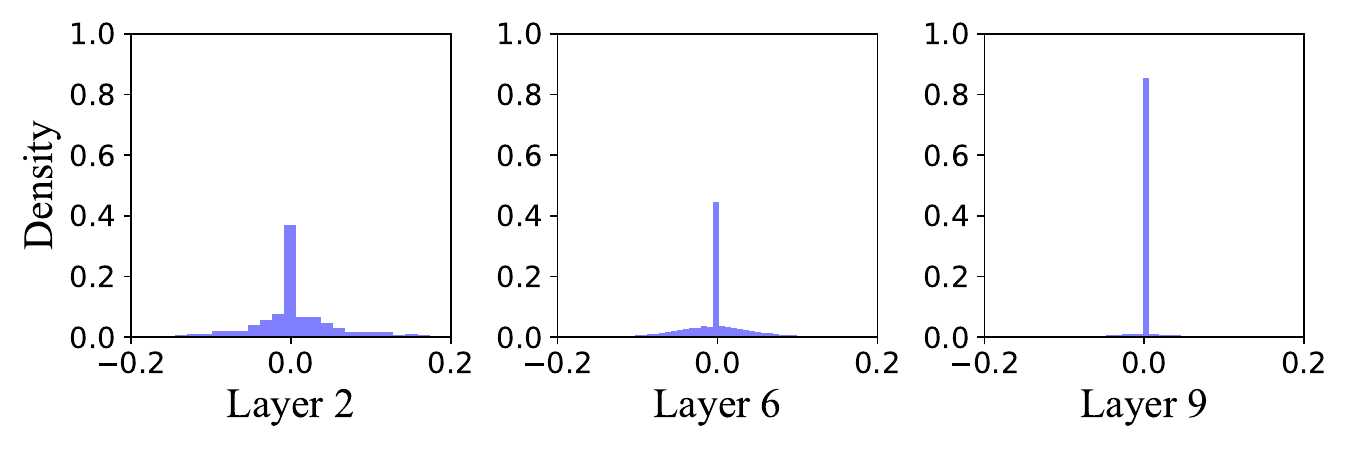}
    \caption[Weight distribution]{Distribution of weights in three representative layers of the MobileNetV1 architecture trained on the CIFAR-100 dataset.}
    \label{fig:weight-distribution}
\end{figure}
The weight distributions and sensitivity values can vary significantly, and as a result, different layers may benefit from different bit-widths to achieve maximum gains in terms of memory and computation cost savings without compromising accuracy.

Additionally, widening layers can profoundly impact the accuracy of the model \cite{DBLP:conf/bmvc/ZagoruykoK16}.
Therefore, the search process should not be confined to reducing the layer-width only; it must also identify situations where widening a layer, combined with a sufficiently reduced bit-width, leads to potential accuracy gains without compromising cost savings.

The search process for optimizing the bit-width and layer-width of DNNs should extend beyond traditional metrics such as FLOPs and memory footprint when evaluating cost savings.
While these metrics are useful, they may not directly reflect the real-world performance and efficiency of the model.
Therefore, it is imperative to consider factors like real latency, throughput, and energy consumption when comparing models in terms of cost savings.
Hence, devising an algorithm that effectively explores the bit-width and layer-width search space while considering the diverse weight distributions and sensitivity values and the varying impact of layer-width on accuracy and real-world performance becomes critical.
This necessitates an approach that can intelligently adapt the bit-width and layer-width for each layer, striking a balance between latency, throughput, or energy consumption and the preservation of model accuracy.

This paper presents a novel model-based optimization method based on tree-structured Parzen estimators (TPEs) \cite{DBLP:conf/nips/BergstraBBK11} to address the challenge of simultaneously searching for the optimal bit-width and layer-width values for DNN layers.
The major innovations of the presented optimization method are as follows:
\begin{itemize}
    \item{Exploiting Second-Order Derivatives: In addition to leveraging the distribution of layer weights, our method incorporates second-order derivatives of the loss function with respect to layer weights. This exponential pruning of the search space is particularly effective when dealing with pre-trained models, resulting in enhanced optimization efficiency.}
    \item{Hardware-Aware Objective Function: Our approach takes into account essential information about the target hardware that will execute the optimized DNN. We construct latency, throughput, and energy consumption models, which, combined with model accuracy, define a composite objective function guiding the optimization process.}
    \item{Handling Flat Loss Landscapes: DNNs often exhibit flat loss landscapes, which pose challenges for conventional optimizers. To address this, we fit surrogate distributions to both desirable and undesirable observations of the objective function. This adaptation enables us to achieve comparable or better objective values compared to state-of-the-art optimizers while significantly reducing convergence time, typically by a factor of at least 10x.}
    \item{Joint Optimization of Bit-Width and Layer-Width: Unlike conventional approaches that treat bit-width and layer-width independently, we search for joint optimal configurations. This novel approach enables us to discover configurations that yield higher-quality results, which would have otherwise been challenging to find with independent optimization.}
\end{itemize}
Through an extensive series of experiments, we demonstrate the effectiveness and efficiency of our method. The results showcase substantial improvements in DNN processing efficiency without sacrificing predictive accuracy. 

\section{Related Work}
\label{sec:prelim}
\noindent
Mixed-precision quantization exploits the fact that not all model parameters require the same level of precision to maintain model accuracy.
Quantization-aware training techniques simulate the quantization effects during the training process, allowing the model to adapt to lower precision, while post-training quantization techniques quantize DNN models after they have been trained at full precision.
The former typically achieves higher accuracy at the expense of increased training time.

In sensitivity-based mixed-precision quantization, first- or second-order gradient statistics, or other sensitivity metrics, are employed as a proxy to layer importance, which is subsequently used to assign an appropriate bit-width to each layer \cite{DBLP:conf/nips/DongYAGMK20,DBLP:conf/eccv/TangOWZJWZ22,DBLP:conf/ijcnn/Yuan0HP20}.
However, sensitivity-based quantization comes has some shortcomings.
First, it does not take into account the influence of quantized weights and input activations on output activations, which are inputs to other layers of a DNN including batch normalization layers commonly found after convolutional layers.
In other words, the batch statistics of batch normalization layers or weights of succeeding convolutional layers are trained with full-precision input activations, and any change in those input activations due to quantization may hamper model accuracy.
Second, gradient-based methods utilize gradients calculated on the full-precision model, thus overlooking the impact of quantization on gradients, which is a significant effect when using ultra-low bit-width quantization.
Third, the sensitivity values are often calculated based on the weights of layers, and thus, do not provide any insight into the proper bit-width of input activations.
As a result, sensitivity-based quantization approaches often fall short of achieving high model size compression ratios and/or maintaining accuracy levels, leaving ample space for further optimization.

In reinforcement learning (RL)-based mixed-precision quantization, an agent interacts with a quantized DNN environment, adjusting the bit-width configurations for various layers using RL algorithms, such as policy gradients, which provide rewards based on model accuracy and resource consumption, enabling the agent to discover optimal quantization strategies through iterative exploration and exploitation \cite{DBLP:conf/cvpr/WangLLLH19,DBLP:journals/micro/ElthakebPMYE20,DBLP:conf/iclr/LouGKLJ20}.
Despite their potential benefits, these RL-based techniques are confronted with a significant challenge: the considerable search time involved in the RL training process.
As a consequence, achieving favorable results within specified GPU-hour constraints becomes challenging.

Differentiable search-based approaches build upon the concept of differentiable architecture search \cite{DBLP:conf/iclr/LiuSY19} and apply it to mixed-precision quantization.
In these approaches, a super-network is trained to replace each layer (or activation function) of a DNN, such as a convolutional layer, with parallel branches where each branch implements a quantized version of the layer (or activation function) \cite{DBLP:conf/cvpr/CaiV20}.
These approaches have significant drawbacks.
First, they suffer from large training times and high GPU RAM requirements due to the very large size of the super-network.
Second, the distribution defined over each group of parallel branches, which is found during training, may not converge to a uni-modal distribution, rendering the selection of a single quantized version of a layer infeasible.

Lastly, sequential model-based mixed-precision quantization approaches employ surrogate models that define a mapping between the search space and an objective function to guide the exploration of the search space \cite{DBLP:conf/date/SonPVC23}.
Tree-structured Parzen estimator (TPE) is a powerful tool that has shown great success in hyperparameter tuning \cite{DBLP:conf/icml/BergstraYC13}.
However, its naive application to mixed-precision quantization ignores the flat loss landscapes that are prevalent in DNNs.
This greatly increases the search time and yields inferior objective values.
Our TPE-based optimization achieves a \(12\times\) average search time speedup and a 20\% reduction in model size compared to \cite{DBLP:conf/date/SonPVC23} while preserving the model accuracy.

\section{Proposed Method}
\label{sec:proposed-method}
\noindent
This section details the three main components of our model-based bit-width and layer-width optimization framework, i.e., Hessian-based search space pruning, \(K\)-Means TPE, and hardware-aware performance modeling.

\subsection{Hessian-Based Search Space Pruning} 
\noindent
The size of the search space grows exponentially as the number of DNN layers increases.
As a result, pruning the search space by eliminating the bit-width choices that are likely to hamper the model accuracy or cause unnecessary computations is of paramount importance due to its exponential reduction in the size of the search space.
The Hessian of the loss function with respect to each layer's weights provides an excellent starting point to evaluate the criticality of the bit-width for each layer. We first prove an important result. 

\begin{lem}
    The maximum error induced in a DNN's output by unit perturbation in a layer's parameters is bounded by the trace value of the Hessian matrix of the loss with respect to that layer's parameters.
\end{lem}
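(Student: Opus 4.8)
The plan is to control the perturbed loss through its second-order Taylor expansion about the trained weights and then reduce the resulting quadratic form to the trace of the Hessian. Write $L(\mathbf{w})$ for the loss as a function of the parameters $\mathbf{w}\in\mathbb{R}^{n}$ of the layer under consideration, and let $\Delta\mathbf{w}$ be a perturbation with $\|\Delta\mathbf{w}\|_{2}=1$ (the ``unit perturbation''). A Taylor expansion gives
\[
    L(\mathbf{w}+\Delta\mathbf{w})-L(\mathbf{w})
    = \nabla L(\mathbf{w})^{\top}\Delta\mathbf{w}
    + \tfrac12\,\Delta\mathbf{w}^{\top} H\,\Delta\mathbf{w}
    + o\!\left(\|\Delta\mathbf{w}\|_{2}^{2}\right),
\]
where $H=\nabla^{2}L(\mathbf{w})$ is the Hessian of the loss with respect to that layer's parameters. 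For a pre-trained model sitting at (or near) a local minimum the first-order term vanishes, $\nabla L(\mathbf{w})\approx\mathbf{0}$, so the induced output error is governed by the quadratic term.

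Next I would maximize the quadratic form over admissible perturbations. By the Rayleigh--Ritz (min-max) characterization, $\max_{\|\Delta\mathbf{w}\|_{2}=1}\Delta\mathbf{w}^{\top} H\,\Delta\mathbf{w}=\lambda_{\max}(H)$, so the worst-case induced error is $\tfrac12\lambda_{\max}(H)$ to leading order. It then remains to bound $\lambda_{\max}(H)$ by $\operatorname{tr}(H)$: at a local minimum $H$ is positive semi-definite, hence all eigenvalues $\lambda_{1},\dots,\lambda_{n}$ are non-negative and $\operatorname{tr}(H)=\sum_{i=1}^{n}\lambda_{i}\ge\lambda_{\max}(H)$. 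Combining the two estimates, the maximum error is at most $\tfrac12\operatorname{tr}(H)$, i.e.\ it is bounded---up to the universal constant $\tfrac12$---by the trace of the layer Hessian, which is the claim.

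I expect the main obstacle to be making the informal ingredients rigorous: (i) discarding the $o(\|\Delta\mathbf{w}\|_{2}^{2})$ remainder, which requires either restricting attention to sufficiently small perturbations or invoking a Lipschitz bound on $H$ so that the cubic and higher terms are dominated; and (ii) the positive semi-definiteness of $H$, which is exact only at a true local minimum---in practice one argues it holds approximately, or replaces $H$ by its positive-semidefinite surrogate (e.g.\ a Gauss--Newton or Fisher approximation), for which the eigenvalue--trace inequality is exact. If the statement is intended for the network output rather than the loss value, the same three-step argument applies verbatim with $L$ replaced by any scalar readout of the output, or composed with a Lipschitz loss to transfer the bound. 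With these caveats handled, the whole proof is the chain ``Taylor expansion $\to$ Rayleigh quotient $\to$ trace dominates the top eigenvalue of a PSD matrix''.
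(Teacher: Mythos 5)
Your proof is correct and follows essentially the same route as the paper: a second-order Taylor expansion with the gradient term dropped at the trained optimum, followed by a spectral bound of the quadratic form $\tfrac12\,\Delta\mathbf{w}^{\top}H\,\Delta\mathbf{w}$ by $\tfrac12\operatorname{tr}(H)$. The only difference is cosmetic---you pass explicitly through $\lambda_{\max}(H)$ via Rayleigh--Ritz, whereas the paper bounds $\sum_i a_i^2\lambda_i$ directly in the eigenbasis---and you are in fact more careful than the paper in flagging that the eigenvalue--trace inequality requires the Hessian to be positive semi-definite, an assumption the paper uses implicitly without stating it.
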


\begin{proof}
    Assume we freeze all parameters of a DNN except those of a single convolutional filter or neuron in layer \(l\).
    Let \(\mathbold{w}_l\) and \(\mathbold{w}_l^\mathrm{q}\) denote the parameters of layer \(l\) and their corresponding quantized values, respectively.
    Taylor’s Theorem implies that the output loss may be approximated around $\mathbold{w}_l^\mathrm{q}$ as follows: 
    \begin{equation*}
    \begin{split}
        \mathcal{L}(\mathbold{w}_l) \approx \mathcal{L}(\mathbold{w}_l^\mathrm{q}) & + (\mathbold{w}_l - \mathbold{w}_l^\mathrm{q})^\mathrm{T} \mathbold{\nabla}\mathcal{L}_{\mathbold{w}_l}\\
        & + \frac{1}{2}(\mathbold{w}_l - \mathbold{w}_l^\mathrm{q})^\mathrm{T} \mathbold{H}_{\mathbold{w}_l}(\mathbold{w}_l - \mathbold{w}_l^\mathrm{q}),
    \end{split}
    \end{equation*}
    where \(\mathbold{\nabla}\mathcal{L}_{\mathbold{w}_l}\) is the gradient vector of the loss function with respect to parameters \(\mathbold{w}\) evaluated at \(\mathbold{w}_l\), which is nearly zero on a trained model, and \(\mathbold{H}_{\mathbold{w}_l}\) denotes the Hessian matrix, whose entries are the second derivatives of the loss function with respect to parameters \(\mathbold{w}\) evaluated at  \(\mathbold{w}_l^\mathrm{q}\).
    Therefore, we have:
    \begin{equation}
        \label{eq:taylor-expansion-simplified}
        \mathcal{L}(\mathbold{w}_l) - \mathcal{L}(\mathbold{w}_l^\mathrm{q}) \approx \frac{1}{2}(\mathbold{w}_l - \mathbold{w}_l^\mathrm{q})^\mathrm{T} \mathbold{H}_{\mathbold{w}_l}(\mathbold{w}_l - \mathbold{w}_l^\mathrm{q}).
    \end{equation}
    By writing the spectral decomposition of the Hessian matrix as \(\mathbold{H}_{\mathbold{w}_l} = \mathbold{U} \mathbold{D} \mathbold{U}^T\), where \(\mathbold{U}\) is a unitary matrix and \(\mathbold{D}\) is a diagonal matrix, and plugging this expression back into \eqref{eq:taylor-expansion-simplified}, we obtain:
    \begin{equation*}
    \Delta \mathcal{L}(\mathbold{w}_l) \approx \frac{1}{2}(\mathbold{U}^\mathrm{T} \Delta \mathbold{w}_l)^\mathrm{T} \mathbold{D} (\mathbold{U}^\mathrm{T} \Delta \mathbold{w}_l).
    \end{equation*}
   with \(\Delta \mathbold{w}_l = \mathbold{w}_l - \mathbold{w}_l^\mathrm{q}\). Denoting \(\mathbold{U}^T \Delta \mathbold{w}_l\) with \(\mathbold{a}\), we have:
    \begin{equation*}
    \begin{split}
        \Delta \mathcal{L}(\mathbold{w}_l) \approx \frac{1}{2} \mathbold{a}^\mathrm{T} \mathbold{D} \mathbold{a} &= \frac{1}{2} \sum \limits^{}_{i} a_i^2 \lambda_i\\
        &\leq \frac{1}{2} \max_{i}(a_i^2) \sum \limits^{}_{i} \lambda_i.
    \end{split}
    \end{equation*}
    Since \(\mathbold{U}\) is a unitary matrix and the lemma assumes unit perturbation, \(\max_{i}(a_i^2) \leq 1\).
    Therefore, 
    \begin{equation*}
    \Delta \mathcal{L}(\mathbold{w}_l) \leq \frac{1}{2} \mathrm{Tr}(\mathbold{H}_{\mathbold{w}_l}).
    \end{equation*}
\end{proof}
The significance of this lemma lies in its implications for training and quantizing DNNs. For example, a small trace value of the Hessian matrix indicates that the loss function's surface is relatively flat around the current parameter values, making the model less sensitive to perturbations and potentially more stable during training. On the other hand, a large trace value suggests that the loss function's surface has steeper curves, indicating higher sensitivity to parameter changes and potentially making training more challenging.

The Hessian-based search space pruning algorithm starts by normalizing the trace of the Hessian of the loss function with respect to each layer's weight by the number of weights in that layer to find an estimate of the relative importance of layers in a DNN.
It then applies \(k\)-means clustering to the normalized trace values, sorts clusters in non-increasing order of their centroid values, and assigns candidate bit-widths to the layers within each cluster according to these centroid values by assigning higher bit-widths to layers that are within clusters with larger trace values. We use the same bit-width for weights and input activations of a DNN layer for improved hardware performance.
For example, when \(k = 4\), possibly overlapping subsets of candidate bit-widths \(B = \{8, 6, 4, 3, 2\}\) can be considered for different clusters, e.g., \(B_1 = \{8, 6\}\), \(B_2 = \{6, 4, 3\}\), \(B_3 = \{4, 3, 2\}\), and \(B_4 = \{3, 2\}\), where the bit-widths of layers within the cluster with the largest centroid value are selected from \(B_1\), and those for the second largest centroid value are chosen from \(B_2\), and so on.\footnote{The part of the search space defined by the layer-width values is not pruned, and layer-width is always taken from the set \(S = \{0.75, 0.875, 1, 1.125, 1.25 \}\).}

\subsection{\(K\)-Means TPE}
\noindent
An integral part of the presented framework is a novel sequential model-based optimization methodology that is based on the TPE methods.
The main idea behind TPE methods is to recursively partition the data space into smaller regions (nodes) and estimate the probability density function within each region separately. 
In particular, the TPE methods use Bayesian reasoning to propose configurations from the search space that are likely to improve an objective function as explained next.
After drawing a few random configurations \(\chi = \{x^{(1)}, x^{(2)}, \ldots, x^{(k)}\}\) from the search space and observing their corresponding objective function values \(Y = \{y^{(1)}, y^{(2)}, \ldots, y^{(k)} \}\), the TPE methods define a threshold \(\hat y\) that is equal to the largest of \(q\)-quantiles of \(Y\), and creates a surrogate distribution \(l(x)\) for configurations with desirable objective function values (\(y^{(i)} \geq \hat y\) in a maximization problem) and a surrogate distribution \(g(x)\) for configurations with undesirable objective function values (\(y^{(i)} < \hat y\).)
The candidate configuration to be evaluated next is an \(\tilde{x}\) that maximizes the \(\frac{l(x)}{g(x)}\) ratio.
The chosen \(\tilde{y}\) leads to an update in \(\hat y\), \(l(x)\), and \(g(x)\) until a stopping criterion is met. 
%
% The stopping criterion comes into play during the construction of the tree, guiding when to stop further partitioning and consider a node as a leaf node. 
% %
% Commonly used stopping criteria in TPE methods include Maximum Depth (of the tree), Minimum Number of Samples per Node, Minimum Node Size, and Maximum Number of Nodes.  
%We use ... as the stopping criterion for our proposed TPE method.  
%

Due to the flat loss landscapes that are prevalent in DNNs, widely different configurations from the search space may yield very close objective values.
This becomes problematic when objective values of configurations from promising parts of the search space fall slightly below \(\hat y\), which puts those configurations in \(g(x)\).
This effectively discourages exploring those parts of the search space, which, in turn, can yield configurations with inferior objective values.

To address this problem, we introduce a novel dual-threshold TPE method that incorporates \(k\)-means clustering in the threshold definition process (which we name \(k\)-means TPE.)
The dual-threshold optimizer starts with an initial \(k\) (where \(k \geq 3\)) for clustering the elements of \(Y\), sorts clusters in decreasing order of their centroid values (\(C_1, \ldots, C_k\)), and defines the surrogate distributions as follows:
\begin{equation*}
    p(x|y)=
    \begin{cases}
        l(x), & \text{if \(y \in C_1\) } \\
        g(x), & \text{if \(y \in C_k\) .}
    \end{cases}
\end{equation*}
After every few iterations of the search process, we increase $k$, which tightens the criteria of being a desirable or undesirable configuration.
This effectively implements an annealing process that initially allows for large moves in the search space to explore distant regions of the search space, and gradually reduces the move size so that search is narrowed to nearby regions of the search space (i.e., those regions that are close to the currently found promising solutions.)

\subsection{Hardware-Aware Objective Function}
\label{subsec:hardware-aware-objective}
\noindent
The problem addressed in this work is mathematically formulated in its general form as follows:
\begin{equation*}
\begin{aligned}
    \max_{B, S} \quad & \mathrm{accuracy}(\mathbold{\Theta}, \mathcal{D}, B, S)\\
    \textrm{s.t.} \quad & \mathrm{modelSize}(\mathbold{\Theta}, B,S) \leq \mu\\ 
    & \mathrm{latency}(\mathbold{\Theta}, \mathcal{H}, B,S) \leq \tau \\
    & \mathrm{energy}(\mathbold{\Theta}, \mathcal{H}, B,S) \leq \epsilon \\
    & \mathrm{throughput}(\mathbold{\Theta}, \mathcal{H}, B,S) \geq \pi,
\end{aligned}
\end{equation*}
where \(\mathbold{\Theta}\) denotes the parameters of the target DNN to be optimized, \(\mathcal{D}\) comprises samples from a target dataset, \(B\) and \(S\) denote the sets of candidate bit-widths and layer-widths multipliers, respectively, \(\mathcal{H}\) characterizes the target hardware, \(\mu\), \(\tau\), and \(\epsilon\) represent the model size, latency, and energy consumption upper bounds, and \(\pi\) denotes the throughput lower bound.

The optimization problem can be addressed by solving its Lagrangian dual problem, where large Lagrangian multipliers are assigned to various constraints.
In practical applications, one or more of the said constraints may be relaxed.
The focus of this work is on the model size and latency constraints.

Exemplary target hardware used in this work is a Xilinx FPGA in which the chip layout includes columns of DSPs, BRAMs, and CLBs.
Our design for the hardware architecture of the accelerator that processes a given DNN is comprised of
\begin{itemize}
    \item a 2D systolic array \(M \times N\) of processing elements (PEs) where each PE contains one DSP and a companion BRAM, and
    \item a memory hierarchy that encompasses off-chip DRAM, on-chip URAMs and BRAMs, and register files.
\end{itemize}
When processing some DNN layer, a first set of \(N\) input activations are loaded into the first row of the systolic array, and each input activation is multiplied with a corresponding weight for the first output filter, residing in the BRAM associated with each of the PEs in that row.
In the next cycle, the first set of input activations is passed down the second row of PEs, while a second set of input activations are loaded into the first row. 
The second set of input activations is multiplied with a second set of weights associated with the first filter while the first set of activations is multiplied with the first set of weights associated with the second output filter. 
Evidently, this process is repeated multiple times (i.e., \(N'/N\) where $N$ is the number of entries in the input feature patch, which is commonly equal to \(3 \times 3 \times I\) with $I$ denoting the input channel count) to produce the filter results for \(M\) output channels of the layer.
The partial products computed in different cycles are accumulated within each PE in the 2D array. 
In the end, all partial accumulation results stored locally in the PEs of each row are passed onto a  tree adder structure (which we call a processing unit, of which there are \(M\)) to produce the final scalar convolution result for each of the output channels. 
We point out that when the number of output channels \(M'\) is larger than \(M\), the systolic array must be invoked \(M'/M\) times.
Each DSP block can perform a \(27 \times 18\)-bit multiplication followed by a 48-bit accumulation.
A crux of our design lies in packing multiple low-bit-width operands in each line of memory in addition to utilizing each DSP to efficiently perform multiple multiplications and additions.

We extend the idea of HiKonv \cite{DBLP:conf/aspdac/LiuCGPXC22}, which introduces packed operations for 1D convolutions, to support 2D convolutions with arbitrary bit-widths.
More specifically, our operand and operation packing approach is capable of performing two multiplications for eight- or six-bit operands, six multiplications and two additions for four- or three-bit operands, and 15 multiplications and eight additions for two-bit operands all while only using a single DSP.
Fig.~\ref{fig:packing} illustrates an example of operand and operation packing for four-bit operands.
\begin{figure}[tb]
    \centering
    \includegraphics[width=\columnwidth]{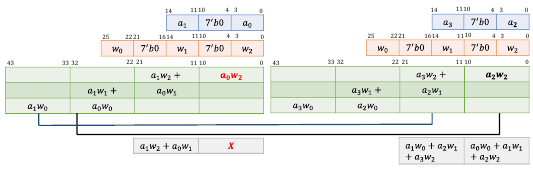}
    \caption{Four-bit operand and operation packing. The design yields the computations required for two rows of convolutional kernels every two cycles.}
    \label{fig:packing}
\end{figure}

As a result of our architecture design and packing scheme, weight quantization yields linear weight size reduction as a function of the bit-width selected for a layer while latency reduction is a function of the number of operations that can be packed as explained above.
Considering the total number of layers and the number of weights/operations per layer, the overall model size reduction and speedup can be easily calculated.
Alg.~\ref{alg:k-means-based-TPE} summarizes different steps of the search process presented in this work. 
\begin{algorithm}[tb]
\footnotesize
  \caption{\(k\)-means-based TPE}
  \label{alg:k-means-based-TPE}
  \begin{algorithmic}[1]
    \Require
        \Statex  \(\mathbold{\Theta}\) \Comment{DNN to be optimized}
        \Statex  \(\mathcal{D}\) \Comment{dataset}
        \Statex  \(\mu\), \(\tau\)       \Comment{model size limit and latency constraint}
        \Statex \(B, S\) \Comment{available bit-widths and layer-width multipliers}
        \Statex \({n_0}\) \Comment{no. of random configurations to evaluate before building surrogates}
        \Statex \({n}\) \Comment{total number of configurations to evaluate}
        \Statex \(\alpha = 0.98, \mathrm{maxiters} = 100, c = 0.25\)
            
    \Ensure
        \Statex \(x^*\) \Comment{the optimal bit-widths and layer-widths configurations}
    
    \State \(\texttt{sensitivities} = \texttt{analyze\_hessian}(\mathbold{\Theta},\mathcal{D})\)
    \State \(\texttt{space} = \texttt{create\_search\_space}(\texttt{sensitivities}, B, S)\) 
    \State \(\chi = \texttt{sample\_randomly}(\texttt{space}, n_0)\)

    \Statex
    
    \State \(\chi_l = \emptyset\)    \Comment{desirable configurations}
    \State \(\chi_g = \emptyset\)    \Comment{undesirable configurations}
    \State \(\texttt{objectives} = []\) \Comment{initializing objective values list}

    \ForEach {\(x\) \(\in \chi\)} 
        \State \({\texttt{objectives.add}(\texttt{evaluate}(x, \mathbold{\Theta}, \mathcal{D}, \mu, \tau))}\)
    \EndFor

    \Do
        \State \(k = \lceil \frac{1}{c} \rceil\)
        \State \(\texttt{clusters} = \texttt{k\_means\_and\_sort}(\texttt{objectives}, k)\)
        \State \(\chi_l = C_1, \chi_g = C_k\)
        \State \(l(x) = \texttt{fit\_gaussian}(\chi_l)\)
        \State \(g(x) = \texttt{fit\_gaussian}(\chi_g)\)
        \State \(\tilde{x} = \argmax_x l(x)/g(x)\)
        \State \(\chi = \chi \bigcup \{\tilde{x}\}\)
        \State \({\texttt{objectives.add}(\texttt{evaluate}(\tilde{x}, \mathbold{\Theta}, \mathcal{D}, \mu, \tau))}\)
        \State \(c = c \times \alpha\)
    \doWhile {\(\mathrm{iter} \leq (n - n_0)\)}
    
    \State \(x^* = \max_x \texttt{objectives}\)
    \State{\textbf{return} \(x^*\)}

\end{algorithmic}
\end{algorithm}

\section{Results \& Discussion}
\label{sec:results}
\noindent
This section presents the results of experiments that evaluate the effectiveness of our methodology, which involve hyperparameter tuning in addition to mixed-precision quantization and layer-width scaling through neural architecture search.

\subsection{Convergence of \(K\)-Means TPE}

This section presents the speedup in convergence for three types of machine learning models and three datasets.
The first experiment involves applying random forest regression to the Iris dataset.
The variables that define the search space are the number of trees in the forest, the maximum depth of each tree, and the minimum number of samples required to split an internal node.

The second experiment involves training a gradient boosting classifer on the Titanic dataset.
The variables that define the search space are the learning rate, number of boosting stages, maximum depth of the estimator, minimum number of samples required to split, minimum number of samples needed to be at a leaf node, and number of features to consider when looking for the best split.
For the first two experiments, \(n_0 = 20\), \(n = 100\), \(k = 4\), and \(\alpha = 0.98\).

Finally, the third experiment involves mixed-precision quantization and layer-width scaling of a ResNet-18 model on the CIFAR-100 dataset.
For this experiment, \(n_0 = 40\), \(n = 160\), \(k = 4\), and \(\alpha = 0.98\).
Figure \ref{fig:convergence} showcases the superior convergence speed of the presented \(k\)-means TPE over the traditional TPE for all experiments.
\begin{figure}[tb]
    \centering
    \includegraphics[width=\columnwidth]{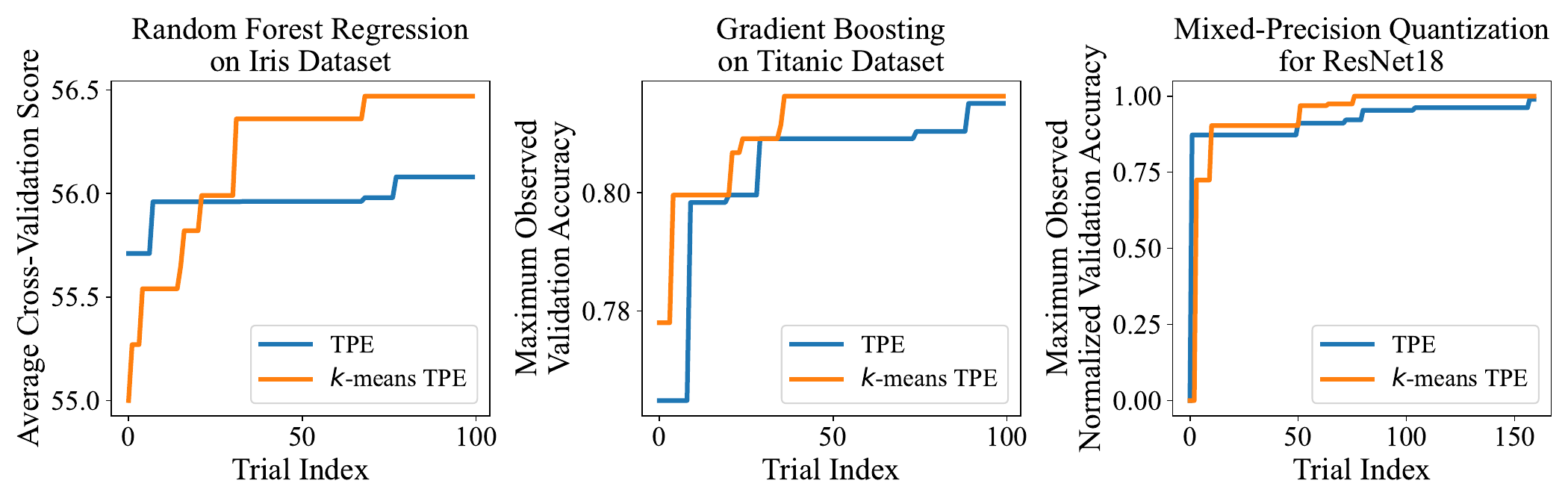}
    \caption[Convergence comparison]{Comparison of the convergence speed of TPE and \(k\)-means TPE for different machine learning algorithms and on Iris, Titanic, and CIFAR-100 datasets.}
    \label{fig:convergence}
\end{figure}
It is observed that \(k\)-means TPE converges to superior or same-quality results at about two to three times fewer evaluations of proposed configurations compared to TPE.

\subsection{Mixed-Precision Quantization and Layer-Width Scaling}

To show the effectiveness of \(k\)-means TPE on a variety of DNNs and datasets, we integrate it into the HyperOpt \cite{DBLP:conf/icml/BergstraYC13} library, and use the PyTorch library \cite{DBLP:conf/nips/PaszkeGMLBCKLGA19} to train and quantize the models.
For all experiments, we only use a very small number of epochs to evaluate different configurations during the search process, similar to \cite{DBLP:conf/iclr/LiuSY19}.
More specifically, for models trained on the CIFAR-10 and CIFAR-100 datasets, the number of epochs to train is set to four, and for the more extensive ImageNet dataset \cite{DBLP:conf/cvpr/DengDSLL009}, this number is set to one.
As shown in Table~\ref{tab:epochs_per_config} for an exemplary model and dataset, such an approximation does not make a noticeable difference in the final results compared to a scenario where each configuration is trained for a much larger number of epochs.
\begin{table}[tb]
\centering
\caption{Impact of the number of epochs to train each configuration on final accuracy, model size, and speedup (compared to FiP-16)}
\label{tab:epochs_per_config}
\resizebox{\columnwidth}{!}{
\begin{tabular}{c c c c c c}
    \toprule
    \multirow{2}{*}{\textbf{Dataset}}   & \multirow{2}{*}{\textbf{Architecture}}    & \textbf{Approach} & \textbf{Accuracy}         & \textbf{Model}         &  \multirow{2}{*}{\textbf{Speedup}} \\
    {}                                  & {}                                        & \textbf{(Epochs per Config.)}  & \textbf{(\%)}             & \textbf{Size (MB)}     & {} \\
    \midrule[\heavyrulewidth]
    \multirow{2}{*}{\textbf{CIFAR-10}}    & \multirow{2}{*}{\textbf{ResNet-20}}    &
    90    & {91.94}    & {0.097}    & {10.86\(\times\)} \\
    {} & {} & 4    & {91.90}    & {0.088}    & {11.14\(\times\)} \\
    \bottomrule
\end{tabular}}
\end{table}
After finding the best configuration, we train the models for 90 epochs using the Adam \cite{DBLP:journals/corr/KingmaB14} optimizer with a weight decay equal to \(10^{-4}\).
We apply OneCycleLR learning rate scheduling with maximum learning rate of 0.01.

Table~\ref{tab:results} compares the accuracy, model size, and speedup (in terms of latency) of different DNNs trained on different datasets and quantized using a variety of approaches.
\begin{table}[tb]
\centering
\caption{Comparison of accuracy, model size, and speedup on different datasets and DNN architectures}
\label{tab:results}
\resizebox{\columnwidth}{!}{
\begin{tabular}{c c l c c c}

\toprule
\multirow{2}{*}{\textbf{Dataset}}   & \multirow{2}{*}{\textbf{Architecture}}    & \textbf{Approach} & \textbf{Accuracy}         & \textbf{Model}         &  \multirow{2}{*}{\textbf{Speedup}} \\
{}                                  & {}                                        & {\textbf{(W/A)}}  & \textbf{(\%)}             & \textbf{Size (MB)}     & {} \\
\midrule[\heavyrulewidth]
\multirow{23}{*}{\rotatebox[origin=c]{90}{\textbf{ImageNet}}}    & \multirow{6}{*}{\textbf{ResNet-18}}    &
Baseline (FiP16/FiP16)    & {71.0}    & {23.38}    & {1.00\(\times\)} \\
\cmidrule{3-6}
{} & {} & {PACT \cite{DBLP:journals/corr/abs-1805-06085} (3/3)}    & {68.1}    & {4.38}    & {-} \\
{} & {} & {AutoQ \cite{DBLP:conf/iclr/LouGKLJ20} (4MP/4MP)} & {68.2} & {5.80} & {-} \\
{} & {} & {\cite{DBLP:conf/eccv/TangOWZJWZ22} (3MP/3MP)}    & {69.7}    & {4.38}    & {-} \\

{}  &  {}  &  {EvoQ \cite{DBLP:conf/ijcnn/Yuan0HP20} (?MP/?MP)}  &  {68.5}  &  {5.85} & {-} \\
{}  & {}    & {Ours (2MP/2MP)}    & {\textbf{70.8}}    & {\textbf{4.01}}    & {10.90\(\times\)} \\

\cmidrule[\heavyrulewidth]{2-6}
{}  & \multirow{7}{*}{\textbf{MobileNetV2}}    &
Baseline (FiP16/FiP16)    & {72.6}    & {6.8}    & {1.00\(\times\)} \\
\cmidrule{3-6}
{} & {} & {PACT \cite{DBLP:journals/corr/abs-1805-06085} (4/4)}    & {61.4}    & {-}    & {5.09\(\times\)} \\
{}  & {}    & {HAQ \cite{DBLP:conf/cvpr/WangLLLH19} (4MP/4MP)}    & {67.0}    & {-}    & {5.13\(\times\)} \\
{}  &  {}  &  {\cite{DBLP:journals/corr/HanMD15} (4/4)}  &  {71.2}  & {1.79}   &  {-} \\
{}  & {}  &  {EMQ \cite{DBLP:conf/icassp/LiuZWM021}}  &  {71.0}  &  {\textbf{1.45}}  &  {-} \\
{}  &  {}  &  {EvoQ \cite{DBLP:conf/ijcnn/Yuan0HP20} (?MP/?MP)}  &  {68.9}  &  {1.78} & {-} \\
{}  & {}    & {Ours (2MP/2MP)}    & {\textbf{72.0}}    & {1.50}    & {7.74\(\times\)} \\

\cmidrule[\heavyrulewidth]{2-6}
{}  & \multirow{10}{*}{\textbf{ResNet-50}}    & 
Baseline (FiP16/FiP16)    & {77.3}    & {51.3}    & {1.00\(\times\)} \\
\cmidrule{3-6}
{} & {} & {PACT \cite{DBLP:journals/corr/abs-1805-06085} (3/3)}    & {75.3}    & {9.17}    & {-} \\
{}  &  {}  &  {\cite{DBLP:journals/corr/HanMD15} (3/3)}  &  {75.1}  & {9.36}  &  {-} \\
{}  & {}    & {HAQ \cite{DBLP:conf/cvpr/WangLLLH19} (3MP/3MP)}    & {75.3}    & {9.22}    & {-} \\
{}  &  {}  &  {HAWQ \cite{DBLP:conf/iccv/DongYGMK19} (?MP/?MP)}  &  {75.5}  & {7.96}   &  {-} \\
{}  &  {}  &  {HAWQ-V2 \cite{DBLP:conf/nips/DongYAGMK20} (?MP/?MP)}  &  {75.8}  &  {7.99}  &  {-} \\
{}  & {}  &  {EMQ \cite{DBLP:conf/icassp/LiuZWM021} (?MP/?MP)}  &  {76.0}  &  {8.26}  &  {-} \\
{}  &  {}  &  {EvoQ \cite{DBLP:conf/ijcnn/Yuan0HP20} (?MP/?MP)}  &  {75.5}  &  {12.81} & {-} \\
{}  &  {}  &  {\cite{DBLP:conf/eccv/TangOWZJWZ22} (3MP/4MP)}   &  {76.9}  &  {7.97}  &  {-} \\
{}  & {}    & {Ours (2MP/2MP)}    & {76.7}    & {\textbf{7.15}}    & {13.56\(\times\)} \\

\midrule[\heavyrulewidth]
\multirow{5}{*}{\rotatebox[origin=c]{90}{\textbf{CIFAR-100}}}  & \multirow{2}{*}{\textbf{ResNet-18}}  & 
Baseline (FiP16/FiP16)    & {76.1}    & {23.38}    & {1.00\(\times\)} \\
\cmidrule{3-6}
{} & {} &{Ours (2MP/2MP)}    & {76.1}    & {2.09}    & {22.55\(\times\)}   \\

\cmidrule[\heavyrulewidth]{2-6}
 & \multirow{3}{*}{\textbf{MobileNetv1}}  &
 Baseline (FiP16/FiP16)    & {65.5}    & {8.4}    & {1.00\(\times\)} \\
\cmidrule{3-6}
 
{} & {} & {Ours (2MP/2MP)}    & {66.09}    & {1.66}    & {11.09\(\times\)} \\

\midrule[\heavyrulewidth]
\multirow{4}{*}{\rotatebox[origin=c]{90}{\textbf{CIFAR-10}}}   & \multirow{4}{*}{\textbf{ResNet-20}}  & 
Baseline (FiP16/FiP16)    & {91.5}    & {0.54}    & {1.00\(\times\)} \\
\cmidrule{3-6}
{} & {} & {ReLeQ \cite{DBLP:journals/micro/ElthakebPMYE20}}    & {91.38}    & {0.101}    & {-} \\
{}  & {}    & {Ours}    & {\textbf{91.9}}    & {\textbf{0.088}}    & {11.14\(\times\)} \\

{} & {}  & {}  & {}  &  {}  &  {} \\
\bottomrule
\end{tabular}}
\end{table}

\subsubsection{ImageNet}

\textbf{ResNet-18 \cite{DBLP:conf/cvpr/HeZRS16}} The ResNet-18 model trained with bit-widths and layer-widths returned by the search process achieves 70.8\% validation accuracy with model size reduced to 4.01MB and latency improved by a factor of 10.9 compared to a 16-bit fixed point baseline.
The trained model outperforms PACT \cite{DBLP:journals/corr/abs-1805-06085}, AutoQ \cite{DBLP:conf/iclr/LouGKLJ20}, EvoQ \cite{DBLP:conf/ijcnn/Yuan0HP20}, and the mixed-precision quantization presented in \cite{DBLP:conf/eccv/TangOWZJWZ22} in terms of both accuracy and compression (latency numbers are not reported in these references). 
Compared to the best prior work on ResNet-18 \cite{DBLP:conf/eccv/TangOWZJWZ22}, our ResNet-18 achieves 1.1\% higher accuracy at about 9\% smaller model size (please note that \cite{DBLP:conf/eccv/TangOWZJWZ22} uses mixed-precision quantization with a minimum of three bits per weight, but it does not report model size, so we underestimate their model size by assuming all their weights are mapped to three bits).

\textbf{MobileNetV2 \cite{DBLP:conf/cvpr/SandlerHZZC18}} The \(k\)-means TPE can compress the MobileNetV2 model to only 1.5 MB while causing a 0.6\% accuracy drop compared to the baseline floating point model.
These results advance state-of-the-art on the quantization of MobileNetV2:
the optimized MobileNetV2 model has an only 3\% higher model size compared to EMQ \cite{DBLP:conf/icassp/LiuZWM021} while it achieves 1\% higher accuracy on the difficult ImageNet dataset.

\textbf{ResNet-50 \cite{DBLP:conf/cvpr/HeZRS16}} The optimized ResNet-50 has a 7.15 MB model size while it only causes a 0.6\% accuracy degradation.
To the best of our knowledge, our method is the first to compress the network to this level with an acceptable accuracy drop.
The work in \cite{DBLP:conf/eccv/TangOWZJWZ22} achieves a similar accuracy drop but obtains a model with 11.4\% larger model size.

\subsubsection{CIFAR-100}
Using \(k\)-means TPE, we successfully compressed ResNet-18 and MobileNetV1 on CIFAR-100 datasets by factors of 11.18\(\times\) and 5.06\(\times\), respectively, without any loss in accuracy. The inference latency also experienced a significant improvement, as summarized in Table \ref{tab:results}. Figure \ref{fig:ResNet-18samples} showcases some of the samples explored by the search engine for ResNet-18, alongside the best configuration returned by the algorithm. The effective performance of \(k\)-means TPE in discovering high-performing samples is evident.
\begin{figure}[tb]
    \centering
    \includegraphics[width=0.6\columnwidth]{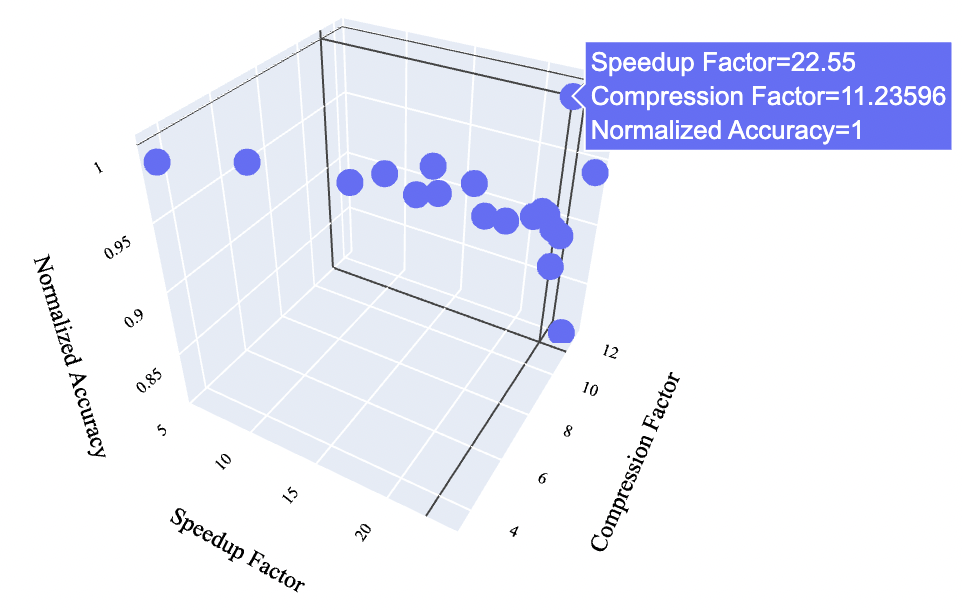}
    \caption[Parallel tree-based reduction]{The space for ResNet-18 compression and the output model}
    \label{fig:ResNet-18samples}
\end{figure}

\subsubsection{CIFAR-10}
For the CIFAR-10 dataset, we conducted experiments with ResNet-20 and compared our results to those of ReLeQ \cite{DBLP:journals/micro/ElthakebPMYE20}. As demonstrated in Table \ref{tab:results}, \(k\)-means TPE outperformed ReLeQ in terms of compressing to ultralow storage size while preserving accuracy levels. Our approach showcased its superiority in achieving highly efficient models without compromising performance.

To gain deeper insights into the efficiency of our \(k\)-means TPE search, we conducted a comparison with the recent work BOMP-NAS \cite{DBLP:conf/date/SonPVC23}, which achieved state-of-the-art results in terms of GPU-hours of search and compression level. As depicted in Table \ref{tab:comparison_to_bomp}, our ResNet-20 model achieved nearly the same level of accuracy as BOMP-NAS, while being 31.5\% smaller in size. Furthermore, the search cost for \(k\)-means TPE was 9.23\(\times\) less than that of BOMP-NAS. Additionally, our ResNet-18 model was 40\% more compressed than BOMP-NAS, with a search time that was 14.63\(\times\) less than the reported GPU-hours for BOMP-NAS. These results highlight the significant advantages of our \(k\)-means TPE approach in terms of search efficiency and model compression.

\begin{table}[tb]
\centering
\caption{Comparison with BOMP-NAS}
\label{tab:comparison_to_bomp}
\resizebox{\columnwidth}{!}{
\begin{tabular}{c l c c c c}

\toprule
\multirow{2}{*}{\textbf{Dataset}}   & \multirow{2}{*}{\textbf{Approach}}    & \textbf{Accuracy} & \textbf{Model Size}         & \textbf{Speedup}         &  \textbf{Search Cost} \\
{}                                  & {}                                        & \textbf{(\%)}  & \textbf{(MB)}             &  {}     & \textbf{(GPU-hour)} \\

\midrule[\heavyrulewidth]
\multirow{2}{*}{\textbf{CIFAR-10}}  & BOMP-NAS \cite{DBLP:conf/date/SonPVC23}  & 
{\textbf{88.67}}    & {0.076}    & {-}    & {12.00} \\
{} & Our ResNet-20 & {88.60} & {\textbf{0.052}} & {18.8\(\times\)} & {\textbf{1.30}} \\
\midrule[\heavyrulewidth]
\multirow{2}{*}{\textbf{CIFAR-100}}   & BOMP-NAS \cite{DBLP:conf/date/SonPVC23}  & 
{75.84}    & {4.199}    & {-}    & {30.00} \\

{} & Our ResNet-18 & \textbf{76.10}    & \textbf{2.090}    & {\textbf{8.5\(\times\)}}    & \textbf{2.05} \\

\bottomrule
\end{tabular}}
\end{table}

Table \ref{table:platforms} shows some configurations found by \(k\)-means TPE. For each model and dataset, the first and second rows of configurations contain the assigned bit-width for each layer and the assigned layer-width scaling factor. As the table demonstrates, to be able to quantize some layers to ultralow precisions (e.g., 2 or 3 bits), the method may strategically decide to scale up the number of filters in that layer. By doing so, the algorithm effectively mitigates quantization errors, achieving a favorable trade-off between precision reduction and layer-width scaling. This demonstrates the effectiveness of the joint optimization of bit-widths and layer-widths through \(k\)-means TPE. 

\begin{table*}[tb]
\centering
\caption{Configurations returned by \(k\)-means TPE for representative DNN architectures}
\resizebox{\textwidth}{!}{
\begin{tabular}{@{}l l l @{}}
\toprule
Model        & Dataset      & Configuration  \\
\midrule
   \multirow{2}{*}{ResNet-18}     & \multirow{2}{*}{ImageNet}   & {8, 6, 6, 4, 4, 6, 6, 4, 4, 4, 4, 2, 2, 3, 3, 2, 2, 6}   \\
  {} & {} & {1.25, 1.25, 1.25, 1.25, 1.25, 0.875, 0.875, 0.875, 0.875, 1, 1, 1, 1, 1, 1, 1, 1, 1}    \\
  \hline
\multirow{2}{*}{ResNet-20}      & \multirow{2}{*}{CIFAR-10}    & {8, 3, 3, 3, 3, 3, 3, 3, 3, 2, 2, 2, 2, 3, 3, 2, 2, 3, 3, 8}  \\ {} & {} & {1, 1, 1, 1, 1, 1, 1, 0.75, 0.75, 0.75, 0.75, 0.75, 0.75, 0.75, 0.75, 0.75, 0.75, 0.75, 0.75, 0.75}   \\
\hline
\multirow{2}{*}{MobileNetV1} & \multirow{2}{*}{CIFAR-100}  &
 {8, 8, 8, 6, 6, 4, 4, 4, 4, 4, 4, 4, 4, 3, 3, 3, 3, 4, 4, 3, 3, 4, 4, 3, 3, 2, 2, 2} \\ 
{} & {} & {1, 1.25, 1.25, 0.875, 0.875, 0.875, 0.875, 1.125, 1.125, 0.875, 0.875, 1.25, 1.25, 1, 1, 1.125, 1.125, 1.25, 1.25, 1.25, 1.25, 1.125, 1.125, 0.75, 0.75, 1, 1, 0.875}\\
\bottomrule
\end{tabular}}
\label{table:platforms}
\end{table*}

\section{Conclusions}
\label{sec:conclusions}

We presented a search-based approach including a Hessian-based pruner and a tree-structured dual-threshold Parzen estimator for automatic optimization of DNNs bit-width and layer-width configurations to enable their efficient deployment.
Through extensive experiments on benchmark datasets, we showed a \(12\times\) average search time speedup and a 20\% reduction in model size compared to the state-of-the-art compression techniques while preserving the model's output accuracy.
Such an advancement facilitates rapid model development and deployment in resource-constrained environments, unlocking new possibilities for scalable deep learning systems.

%\section*{Acknowledgements}	
\balance
\bibliographystyle{IEEEtran}
\bibliography{IEEEabrv,HWNAS}
	
\end{document}